\newcolumntype{x}[1]{%
	>{\centering\hspace{0pt}}p{#1}}%
\newcommand{\EQ}{\begin{eqnarray}}
	\newcommand{\EN}{\end{eqnarray}}
\newcommand{\EQQ}{\begin{eqnarray*}}
	\newcommand{\ENN}{\end{eqnarray*}}
\newcommand{\real}{\mathds{R}}
\renewcommand{\natural}{\mathds{N}}
\newtheorem{theorem}{\bf Theorem}[section]
\newtheorem{lemma}{\bf Lemma}[section]
\newtheorem{definition}{\bf Definition}[section]
\newtheorem{remark}{\bf Remark}[section]
\newtheorem{assumption}{\bf Assumption}[section]
\title {
{\Large \bf
Distributed Task Allocation in Homogeneous Swarms Using Language Measure Theory
}}
\begin{document}

\author{\begin{tabular}{c}
 Devesh K. Jha$^{\dag}$\\
 MERL, Cambridge, MA, USA 
\end{tabular}\\ 
\thanks{ $^\dag$ Email: \small \tt jha@merl.com}
}


\maketitle


\begin{abstract}

In this paper, we present algorithms for synthesizing controllers to distribute a group (possibly swarms) of homogeneous robots (agents) over heterogeneous tasks which are operated in parallel. We present algorithms as well as analysis for global and local-feedback-based controller for the swarms. Using ergodicity property of irreducible Markov chains, we design a controller for global swarm control. Furthermore, to provide some degree of autonomy to the agents, we augment this global controller by a local feedback-based controller using Language measure theory. We provide analysis of the proposed algorithms to show their correctness. Numerical experiments are shown to illustrate the performance of the proposed algorithms.

\end{abstract}

\section{Introduction}\label{sec:intro}
\textit{Swarms} of robots are increasingly becoming popular with recent advances in design of embedded processor. These are, in general, designed to perform different complex tasks in a coordinated manner. 
 Some important examples are fleets of unmanned vehicles deployed in ocean bed for the purposes of mine-hunting or surveillance or drug delivery techniques in humans using micro-scale robots at pre-specified rate and locations~\cite{WPN14,RCN14}. A common element in all these applications is a desirable global behavior which can be achieved where the individual behavior is achieved by the agents themselves and thus, is not considered during global policy design. In these large-scale systems, it is difficult to design controller for each agent (or robot) individually and then coordinate their behavior due to computational requirements. A critical challenge for these distributed systems is to guarantee stability and performance with limited global information. In this paper, we are interested in design and analysis of controllers for controlling global states of a large-scale system with the assumption that each of the individual agents are controllable. Such a controller could be designed using a multi-agent motion planning algorithm~\cite{jha2015game,wang2017two}. 

Following our previous work in~\cite{J18}, 
we model swarm of robots (which we also interchangeably call agents) as a homogeneous collection of irreducible Markov chains. The states of the Markov chain are the heterogeneous tasks that are operated in parallel. The state of the system (swarm) is defined by the distribution of agents over the tasks. A desired system state is then the desired distribution of agents over the heterogeneous tasks. The control problem is to design a policy that the agents can follow to transition between the tasks so that the distribution of agents over tasks converges to the desired distribution. Consequently, the swarm control problem is reduced to achieving a desired probability stationary distribution of the Markov chain. The design problem is to synthesize the transition kernel for the Markov chain with a pre-specified stationary distribution over its states. We present a closed-form solution to estimate a transition kernel for a Markov chain with a known stationary distribution. We compute a state transition matrix to serve as the global policy for the agents-- agents transition between states using this global policy. This policy guarantees convergence to the desired global state. The agents then calculate the expected reward of moving to a different task against staying in their current task based on the excess or deficit of agents in the current and neighboring states. This local policy of an agent is calculated as a perturbation to its global policy which is computed using Language measure theory of Probabilistic Finite State Automata (PFSA)~\cite{R05, CR07, JLWR15}. It is noted that in the following text we often call transition matrix as Markov kernel even though kernels are generally defined for continuous state-space (and our problem is discrete).  

\textbf{Contributions:} An initial conference version of the paper was presented in~\cite{J18}. This paper has the following contributions over the previous publication.
\begin{enumerate}

\item We present analysis of the algorithms for centralized as well as design of stable local controllers for swarm control. 
\item We present empirical results about performance and stability of the proposed controllers. We provide some empirical evidence about choice of parameters for such a controller.

\end{enumerate} 
For the completeness of this work, we provide a full problem description which was also described earlier in~\cite{J18}. 



   \section{Related Work}\label{sec:related_work}
Traditionally, swarm control has been studied under two broad categories: centralized control with global information, where the controller broadcasts the policies to the swarm and decentralized control where some bio-inspired collective behavior laws are used to replicate social behavior in nature~\cite{khamis2015multi}. A lot of work has been done in swarm control addressing issues of centralized and decentralized control. Some examples of centralized control could be found in~\cite{BHHK09,CR09,NK11,AB12,BKN12,AYS13,ito2020pseudo}. The common idea in the centralized control techniques is to synthesize stochastic policy for agents which they can use to switch between the tasks such that the system is driven to the desired steady distribution of agents over the tasks. This approach has the shortcomings that we need global state information. Moreover, controller synthesis becomes restrictive for large systems with large number of states or tasks. Furthermore, it is difficult to make such a system react to local un-modeled dynamics and other disturbances. 

These restrictions have driven a lot of research effort in distributed control approaches to circumvent the shortcomings of the centralized approaches. Some distributed control approaches for swarm control could be found in~\cite{JGAJ12,FJHJ15,HHBK08,CDLN14, jang2018local, jang2018anonymous, bandyopadhyay2017probabilistic, khamis2015multi}. Most of the distributed strategies for swarm control involve using bio-inspired logic to design reactive controllers for desired global behavior. This work is motivated by the idea that we can augment the centralized controller by a state-dependent feedback strategy so that the agents can use the centralized control strategy with limited perturbation from the same; the perturbation in the strategy depends on the current state of an agent and the desired steady state of the system. The distributed state-information-based feedback control leads to a time-varying stochastic linear system. Such time varying stochastic analysis has been been extensively presented in consensus literature~\cite{TN14,BXMS13}. 

While our current problem is mainly motivated and closest to earlier work in~\cite{CR09}, the central control problem is similar to previous work in~\cite{AYS13}. The decentralized control problem is similar to the solution earlier presented in~\cite{FJHJ15}. We present a simplified solution to similar problems studied in these previous papers. The proposed work presents a novel solution using language measure theory and shows some analysis for the stability of the resulting controller.

An initial version of this work was earlier presented at a conference~\cite{J18}. However, in the current paper we present the proof for the correctness of the centralized algorithm which was missing in~\cite{J18}. Furthermore, we show that the distributed framework can get stuck in limit cyclic (or oscillatory) behavior if the feedback gains are not chosen appropriately. We provide a proof sketch for the stability of the distributed system and show some empirical results to demonstrate its correctness. We present several new empirical results regarding stability of the distributed system that depends on controller parameters. We believe that the current work provides more insight to the whole framework than the preliminary version. 


\section{Problem Formulation}\label{sec:ProbForm}
Consider a set of $N$ robots to be allocated among $M$ heterogeneous tasks which are operated in parallel. The number of robots performing task $i\in \{1,\dots,M\}$ at a time epoch $k$ is denoted by $n_i^{[k]}$ . The desired number of robots for task $i$ is denoted by $n^d_i$. We assume that $n_i^d>0$ for all $i \in \{1,\dots,M\}$. Then, to make the system scalable in the number of agents, we define the population fraction at any task $i$ as $p^{[k]}_i=n_i^{[k]}/N$. The state of the swarm is then defined as $\mathbf{p}^{[k]}=[p_1^{[k]},\dots,p_M^{[k]}]$. The desired state of the swarm is given by the fraction of agents at the individual tasks which is denoted by the vector $\mathbf{p}^d=[p_1^d,\dots,p_M^d]$. It is noted that since $n_i^d > 0$ $\forall i \in \{1,2,,\dots,M\}$, thus we have that $p_i^d>0\, \forall \, i \in \{1,\dots, M\}$. 
\begin{assumption}
$|Q|=M <\infty$ i.e., the number of tasks operated in parallel are finite.
\end{assumption}
Next we formalize the definition of an agent and swarm before we state the formal problem.
\begin{definition}\label{def:agent}
\textbf{(Agent in Swarm Modeling)}: An agent is a connected digraph $R=(Q,E)$, where each state $i\in Q$ represents a distinct predefined behavior (i.e., a heterogeneous task), and $E\in \{0,1\}^{|Q|\times |Q|}$ is a matrix such that $E_{ij}=1$ implies there exist a controllable transition from state $i$ to $j$ (implying the connectivity of the tasks). 
\end{definition}

The matrix $E$ in Definition~\ref{def:agent} specifies state transitions of the agent $R$'s state. An agent can only transition to the tasks it is directly connected with in a single hop (or a controllable movement). It is possible to associate probabilities with the transition of an agent between tasks based on the requirements at the tasks or the preference of the robot. The probabilities of the state transitions constitute a finite-state irreducible Markov chain, with the irreducibility property following from the connectedness of the agent graph. Thus the behavior of an agent can be  represented by an irreducible Markov chain $G=(Q, \textbf{P} ,\mathbf{p}^{[0]})$, where $\textbf{P}$ represents a stochastic matrix such that $\textbf{P}_{ij}>0$ if and only if $E_{ij}=1$; $\mathbf{p}^{[0]}$ represents its initial state (which could be a one-hot vector representing the initial task of the agent). The swarm is a collection of such agents, and thus could be denoted as $\mathds S=\{G^{\alpha}:\alpha\in \mathbb{X}\}$, such that $G^{\alpha}=G$, and $\mathbb{X}$ is an index set (finite, countable or uncountable). Consequently, the state of the swarm is defined by the distribution of robots over the tasks i.e., $\mathbf{p}^{[k]}$. The swarm dynamics is represented by the following equation:
\begin{equation}\label{eqn:swarm}
\mathbf{p}^{[k+1]}=\mathbf{p}^{[k]} \textbf{P}
\end{equation}
where $\textbf{P}_{ij}$ represents the probability with which an agent decides to switch from task $i$ to task $j$.

\subsection{Central Controller Design}\label{sec:problem1}
Consider a swarm of robots where the initial state of the swarm is $\mathbf{p}^{[0]}$ and the desired state is represented by $\mathbf{p}^d$. The goal of the central controller is to achieve the desired distribution of agents over the tasks. This can be achieved in a probabilistic setting by using a Markov kernel for transition between tasks such that the desired state is the stationary distribution over the states of the swarm. Thus, given a desired state distribution $\mathbf{p}^d$, the problem is to synthesize a Markov kernel $\textbf{P}^\star$ given any initial kernel $\textbf{P}$ such that the following conditions hold. 
\begin{enumerate}
\item $\sum\limits_{j=1}^M \textbf{P}^\star_{ij}=1$ (i.e., the controller matrix should be row stochastic).
\item $\textbf{P}^\star_{ij}>0$ if and only if $\textbf{P}_{ij}>0$ for any $i,j$ $\in$ ${1,\dots ,M}$.
\item $\textbf{P}^\star$ is an irreducible matrix.
\item $\lim\limits_{k\rightarrow \infty}\|\mathbf{p}^{[0]}(\textbf{P}^\star)^k-\mathbf{p}^d\|_{\infty}=0$.
\end{enumerate}
The initial kernel $\textbf{P}$ contains the information about the connectivity of the underlying tasks and one such kernel could be easily obtained by normalizing the adjacency matrix for the swarm. It is noted that the second condition is required to maintain the connectivity of the tasks that the agents are supposed to perform. Irreducibility of $\textbf{P}^\star$ implies that the limiting time average of the local state is independent of the initial conditions. The last condition ensures the convergence to the desired behavior. This defines the central control problem for swarm control. 
\subsection{Central Control with Distributed Autonomy}\label{sec:problem2}
The next problem is inspired by minimizing the movement of agents to achieve a desired system state as well as during steady state after the desired state is achieved ( due to energy limitations). For these considerations, we want to have a local-information-based policy for the agents which could be calculated as a perturbation to the central control policy $\textbf{P}^\star$. The local communication-based information is used to calculate perturbations to the Markov kernel calculated in the first step as a proportional feedback. Given the state-dependent stochastic policy by the central controller, the agents decide between following the central controller and staying in their current state; this is decided as a function of deficit or excess of agents in their current and neighboring tasks (states) when compared against the desired distribution. The perturbed policy could be derived as a function of $\mathbf{p}_i^k$, the desired state $\mathbf{p}^d_i$ and the neighboring states $\mathbf{p}_j^k$ where $j$ is such that $E_{ij}=1$. Then, it results in time varying stochastic policies for the agents depending on their current state and local information. We denote the perturbed local policies by $\tilde{\textbf{P}}^{[k]}$ at an instant $k$. Also, we want only $\lambda$ fraction of the agents to switch between the tasks at steady state when compared to the number of agents switching tasks under the central policy. Then, the local-information-based perturbed policy has to satisfy the following conditions.
\begin{enumerate}
\item $\sum\limits_{j=1}^M\tilde{\textbf{P}}^{[k]}_{ij}=1$ (i.e., the resulting matrix should be row stochastic at all instants).
\item $\tilde{\textbf{P}}^{[k]}_{ij}>0$ if and only if $\textbf{P}^\star_{ij}>0$ for any $i,j$ $\in$ ${1,\dots ,M}$..
\item $\sum\limits_{j=1}^{|Q|}\tilde{\textbf{P}}^{[k]}_{ij}=1$.
\item $\lim\limits_{k\rightarrow \infty}\|\mathbf{p}^{[0]}\tilde{\textbf{P}}^{[k]}\tilde{\textbf{P}}^{[k-1]}\dots\tilde{\textbf{P}}^{[0]}-\mathbf{p}^d\|_{\infty}=0$
\item $\lim\limits_{k\rightarrow\infty}\tilde{\textbf{P}}^{[k]}_{ij}=\lambda \textbf{P}_{ij}^\star$ for $i\neq j$ and $\lim\limits_{k\rightarrow\infty}\tilde{\textbf{P}}^{[k]}_{ii}=\lambda \textbf{P}_{ii}^\star+(1-\lambda) $
\end{enumerate}
It is noted that the irreducibility of $\tilde{\textbf{P}}^{[k]}$ follows from condition $(1)$ and the fact that $\textbf{P}^\star$ is an irreducible matrix. At this point, we would like to clarify that condition $5$ implies that the robots stay at the same task with an increased probability of $(1-\lambda)$ and thus, the probability to switch task at any state is reduced by fraction $\lambda$ (this defines the autonomy of individual agents).

Both problems are related to synthesis of Markov kernels such that the stationary distribution of the underlying Markov chain achieves the desired state of the swarm in the asymptotic limit. In the following sections, we present solution as well as analysis of some of the proposed algorithms.


\section{Proposed Algorithms and Analysis}\label{sec:proposedapproach}
In this section we present the proposed approach for estimation of the Markov kernels described in section~\ref{sec:ProbForm}. 
\subsection{Algorithm and Analysis for Central Controller Synthesis}\label{sec:closedsoln}
This section presents an analytical solution to solve the control problem described in section~\ref{sec:problem1}. 
Let $\mathbf{p}^d$ be the desired state of the swarm which begins in the state $\mathbf{p}^{[0]}$. Let $\textbf{P}$ be an irreducible stochastic matrix for $G$ and let $\tilde{\mathbf{p}}$ be its unique stationary probability distribution vector.  Then, a Markov kernel which achieves the desired distribution over the swarm in the asymptotic limit could be obtained using the following transformation of the matrix $\textbf{P}$.
$$\textbf{P}_{ij} \longmapsto d_i \textbf{P}_{ij}, i\neq j $$
$$\textbf{P}_{ij} \longmapsto d_i \textbf{P}_{ij}+(1-d_i), i=j $$
where, $d_i \in (0,1)$   $\forall i \in \{1,2,\dots,M\}$ where the vector $\mathbf{d}=[d_1,\dots,d_M]$ is given by the following expression. 

\begin{equation}\label{eqn:closedsoln}
[d_1,\dots,d_M]=[\hat{d}_1,\dots,\hat{d}_M]/\sum_{i=1}^M \hat{d}_i 
\end{equation}
where $[\hat{d}_1,\dots,\hat{d}_M]=[p^{[0]}_1,\dots,p^{[0]}_M]\mathbf{X}^{-1}$ where, $\mathbf{X}^{-1}={\rm diag}(p^d_1,\dots,p^d_M)^{-1}$. The vector $\mathbf{p}^d$ and $\mathbf{p}^{[0]}$ denote the final and initial distribution of the swarm, respectively. The vector $\mathbf{d}$ in equation~(\ref{eqn:closedsoln}) gives the closed form solution to the iterative Algorithm 1 in~\cite{CR09}. This helps in simplifying the controller synthesis complexity which is useful for swarms with a large number of states. It is noted that the perturbations preserve the original topology of the graph representing the connectivity of the tasks. For convenience of presentation, the transformation is presented as a psuedo-code in Algorithm~\ref{algorithm:ClosedForm}. Clearly, the complexity of the Algorithm~\ref{algorithm:ClosedForm} is $O(M)$ where $M$ is the size of the swarm.

\begin{algorithm}[t] \small
\SetKwInput{KwIn}{Input}
\SetKwInput{KwOut}{Output}
	\KwIn{The initial stochastic transition matrix $\mathbf{P}$, the desired probability $\mathbf{p}^d$.}
	\KwOut{The global broadcast policy kernel $\mathbf{P}^\star$.}
	\For{$i \in \{1,.\dots,M\}$}{
	$\hat{d_i}=\frac{p^{[0]}_i}{p^d_i}$\;}
	\For{$i \in \{1,.\dots,M\}$}{
	Normalize $d_i=\frac{\hat{d_i}}{\sum_{i=1}^M \hat{d_i}}$\;}
	\Return{$\{d_i\}_{i=1,\dots,M}$\;}
	\Return{$\mathbf{P}^\star=\tt {diag}(d)\mathbf{P}-\tt {diag}(d)+I$}
    \Comment{$\tt I$ represents the identity matrix of the same size as $\mathbf{P}$}\\
	\caption{${\tt Estimating Markov Kernel}$ }
	\label{algorithm:ClosedForm}
\end{algorithm}	

Next, we present the analysis of the above results to show its correctness. The claim that we are going to prove is stated in Theorem~\ref{thm:centralcontrol}.  We first present a lemma which is required to prove our main claim of this section.  
\begin{lemma}\label{lemma1}
We define a perturbation of an irreducible matrix $\textbf{P}$ as follows:
\begin{equation}
 \hat{\mathbf{P}} = D\mathbf{P}-D+I
{\label{eq:transformation}}
\end{equation}

where,  $D=\rm {diag}(d_i)\, i \in \{1,\dots,M\}$, where $d_i\in (0,1)$ and $I$ is an identity matrix of size $M \times M$. Then $\hat{\mathbf{P}}$ is a stochastic irreducible matrix. 
\end{lemma}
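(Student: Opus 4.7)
The plan is to verify the two claims, stochasticity and irreducibility, by a direct componentwise computation followed by a sparsity-pattern argument for the directed graph of $\hat{\mathbf{P}}$.

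First I would write out the entries of $\hat{\mathbf{P}}$ explicitly. Since $D$ is diagonal, the matrix $D\mathbf{P}-D+I$ has entries
\begin{equation*}
\hat{\mathbf{P}}_{ij} = d_i \mathbf{P}_{ij} \quad (i\neq j), \qquad \hat{\mathbf{P}}_{ii} = d_i \mathbf{P}_{ii} + (1-d_i).
\end{equation*}
To check row-stochasticity I would sum the $i$-th row: the contribution from $D\mathbf{P}$ equals $d_i \sum_j \mathbf{P}_{ij}=d_i$ because $\mathbf{P}$ is row-stochastic, and the contribution from $-D+I$ equals $1-d_i$, so the total is $1$. For nonnegativity, the off-diagonal entries are products $d_i \mathbf{P}_{ij}$ of nonnegatives; the diagonal entries are of the form $d_i \mathbf{P}_{ii}+(1-d_i)$, and since $d_i\in(0,1)$ the second term is strictly positive while the first is nonnegative, giving $\hat{\mathbf{P}}_{ii}>0$ in particular.

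For irreducibility I would argue in terms of the associated directed graph. Because $d_i\in(0,1)$ is strictly positive, for $i\neq j$ we have $\hat{\mathbf{P}}_{ij}>0$ if and only if $\mathbf{P}_{ij}>0$. Therefore $\hat{\mathbf{P}}$ and $\mathbf{P}$ induce exactly the same set of off-diagonal directed edges, and the diagonal self-loops introduced (or enlarged) by the perturbation do not affect strong connectivity of the graph. Since $\mathbf{P}$ is irreducible, its graph is strongly connected, and hence so is the graph of $\hat{\mathbf{P}}$, yielding irreducibility.

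I do not anticipate any real obstacle: the lemma is essentially a bookkeeping verification whose only subtle point is ensuring $d_i<1$ (to keep $1-d_i>0$ so that diagonal entries remain nonnegative, although this stronger positivity is not even required) and $d_i>0$ (to preserve the off-diagonal sparsity pattern and hence irreducibility). I would simply make these two uses of the hypothesis $d_i\in(0,1)$ explicit in the write-up.
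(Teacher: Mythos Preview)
Your argument is correct. The row-sum computation for stochasticity is exactly what the paper does, and your sparsity-pattern argument for irreducibility is airtight: since $d_i>0$, the off-diagonal zero pattern of $\hat{\mathbf{P}}$ coincides with that of $\mathbf{P}$, so the two matrices share the same directed graph (up to self-loops), and strong connectivity is inherited.

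The paper, however, proves irreducibility by a different route. Instead of looking at the graph, it shows that any left eigenvector $\tilde P$ of $\hat{\mathbf{P}}$ for eigenvalue $1$ must satisfy $\tilde P D(\mathbf{P}-I)=0$, whence $\tilde P D$ is (up to scaling) the unique stationary vector $P$ of the irreducible matrix $\mathbf{P}$; thus $\tilde P = P D^{-1}$ is unique and strictly positive, and the paper concludes irreducibility from the existence of a unique strictly positive stationary distribution. Your graph argument is more elementary and more transparent as a proof of irreducibility per se. The paper's eigenvector argument is less direct for this lemma, but it has a payoff: the identity $\tilde P D = P$ that falls out is precisely what is needed in the proof of Theorem~\ref{thm:centralcontrol} to solve for the diagonal $D$ that steers the stationary distribution to a prescribed target. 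If you adopt your proof of the lemma, you would still want to record that relation separately before proving the theorem.
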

\begin{proof} Clearly, $\mathbf{P}_{ij}\geq 0$. Since $D$ is a diagonal matrix, and $\Pi$ is a stochastic matrix, we have
\begin{equation}
 \sum_j \hat{\mathbf{P}}_{ij}=a_i\sum_j \mathbf{P}_{ij}+(1-d_i)=d_i+(1-d_i)=1
{\label{eq:summation}}
\end{equation}

This shows that $\mathbf{P}$ is a stochastic matrix. We now show that $\mathbf{P}$ is also irreducible. If we can show that $\hat{\mathbf{P}}$ has a unique probability vector $\hat{{P}}$ that is element-wise positive, it would imply $\hat{\mathbf{P}}$ is irreducible. Let $\tilde{{P}}$ be an element-wise non-negative vector representing a direction in the eigen-space of $\hat{\mathbf{P}}$ corresponding to its unity eigenvalue, i.e., $\tilde{{P}}[\hat{\mathbf{P}}-I]=0.$ Such a $\tilde{{P}}$ is guaranteed to exist for any stochastic matrix. Equation~{\eqref{eq:transformation}}  yields
\begin{equation}
 \tilde{{P}}[D\mathbf{P}-D+I-I]=0
\end{equation}
\begin{equation}
 \Rightarrow \tilde{{P}}D[\mathbf{P}-I]=0
{\label{eq:stat_vect}}
\end{equation}

 Since, $\mathbf{P}$ is an irreducible stochastic matrix, it has an unique stationary probability vector, element-wise positive, ${P}$. From the uniqueness of the probability vector ${P}$, it follows that, 
\begin{equation}
 \tilde{{P}}D={P}.
\end{equation}
Define,
\begin{equation}
 \hat{{P}}=\frac{1}{\sum_i \tilde{{P}}_i} \tilde{{P}}
{\label{eq:P_vector}}
\end{equation}
where, $\tilde{{P}}={P}D^{-1} $. It is noted that $D$ is a diagonal matrix with nonzero entries and hence invertible. Then, clearly, $\|\hat{P}\| _1 =1 $. Also,  $\hat{P}$ is unique (from equation~\ref{eq:P_vector}), element-wise positive and it lies in the eigen-space of the $\hat{\mathbf{P}}$ matrix corresponding to its unity eigenvalue. Hence, the matrix $\hat{\mathbf{P}}$ is irreducible. 
\end{proof}
\begin{theorem}\label{thm:centralcontrol}
Let $P^{\star} \in \mathds{R}^M$ be an element-wise positive probability vector and let $\mathbf{P} \in \mathds{R}^{M \times M}$ be an irreducible stochastic matrix with a stationary probability vector, $P$. Then, $\exists$ a diagonal matrix $D \in \mathds{R}^{M\times M}$ with $D_{ii}\in (0,1)$ such that $\mathbf{P}^\star=D\mathbf{P}-D+I$ is an irreducible stochastic matrix with stationary probability vector $P^\star$. 
\end{theorem}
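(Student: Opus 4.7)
The plan is to exhibit $D$ explicitly, then invoke Lemma~\ref{lemma1} for the structural conclusions (stochasticity and irreducibility), leaving only the verification that $P^\star$ is the stationary vector of the resulting $\mathbf{P}^\star$.

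First, I would reverse-engineer the required $D$ from the relation $\tilde{P}D[\mathbf{P}-I]=0$ already derived inside the proof of Lemma~\ref{lemma1}. If the stationary $\tilde{P}$ of $\mathbf{P}^\star$ is to be $P^\star$, then $P^\star D$ must lie in the one-dimensional left null space of $\mathbf{P}-I$, which is spanned by the unique element-wise-positive stationary $P$ of the irreducible matrix $\mathbf{P}$. This forces $P^\star_i D_{ii}=\kappa P_i$ for some positive constant $\kappa$, so the natural construction is $\hat d_i := P_i/P^\star_i$ (well defined and positive since both vectors are element-wise positive), followed by normalization $D_{ii}=\hat d_i/\sum_{j=1}^M \hat d_j$.

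Second, this normalization ensures $\sum_i D_{ii}=1$ with each summand strictly positive, hence $D_{ii}\in(0,1)$ (the strict upper bound uses $M\ge 2$; the case $M=1$ is trivial because then $\mathbf{P}=[1]$ and $P^\star=[1]=P$). Lemma~\ref{lemma1} then delivers irreducibility and row-stochasticity of $\mathbf{P}^\star=D\mathbf{P}-D+I$ for free. A short direct computation closes the argument: by construction $P^\star D = P/\sum_j \hat d_j$, and using $P\mathbf{P}=P$,
\[
P^\star\mathbf{P}^\star \;=\; P^\star D\mathbf{P}\;-\;P^\star D\;+\;P^\star \;=\; \tfrac{1}{\sum_j \hat d_j}\bigl(P\mathbf{P}-P\bigr)+P^\star \;=\; P^\star.
\]
Since $\mathbf{P}^\star$ is irreducible its stationary is unique, and since $P^\star$ is already a probability vector, it is the stationary distribution of $\mathbf{P}^\star$.

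The main obstacle, if one can call it that, is conceptual rather than technical: one must notice that the proof of Lemma~\ref{lemma1} already embeds the constraint $P^\star D\propto P$ that pins down $D$ up to scaling. Once that observation is made, the only things left to check are positivity of $\hat d_i$ (immediate from element-wise positivity of $P$ and $P^\star$) and that a suitable normalization lands each $D_{ii}$ strictly inside $(0,1)$; the stationarity identity is then a one-line calculation.
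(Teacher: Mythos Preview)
Your argument is correct and follows essentially the same route as the paper: both derive $D_{ii}\propto P_i/P^\star_i$ from the constraint $P^\star D(\mathbf{P}-I)=0$ (i.e., $P^\star D$ must be a scalar multiple of the unique stationary vector $P$), then normalize and appeal to Lemma~\ref{lemma1}. If anything, your write-up is a bit more complete than the paper's, since you explicitly verify the forward direction $P^\star\mathbf{P}^\star=P^\star$, justify why the normalized entries lie strictly in $(0,1)$, and invoke uniqueness of the stationary distribution; the paper's proof derives the form of $D$ under the assumption that it works and leaves those checks implicit.
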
 
\begin{proof}
 We first assume that such a $D$ matrix does exist and prove the claim by finding one. So, let us assume such an $D$ matrix exist. Then, 
\begin{equation}
 \mathbf{P} \mapsto D\mathbf{P}-D+I(=\mathbf{P}^{\star})
\end{equation}
 such that,
\begin{eqnarray}
{P}^{\star}\mathbf{P}^{\star }={P}^{\star}\\
\Rightarrow {P}^{\star}D(\mathbf{P}-I) =0 \\
\Rightarrow {P}^{\star}D ={P} (\text{see Lemma~\ref{lemma1}})
\end{eqnarray}
Since D is a diagonal matrix, we have
\begin{eqnarray}
\begin{aligned}
  \begin{pmatrix}
   {P}^{\star}_1 & \dots & {P}^{\star}_N
  \end{pmatrix}
  \begin{pmatrix}
   d_1 & & &\\
  & &  \ddots \\
& & & d_M
   \end{pmatrix}
  = 
\begin{pmatrix}
   {P}_1 & \cdots & {P}_M
  \end{pmatrix}\\
\Rightarrow \begin{pmatrix}
             d_1 & \cdots & d_M
            \end{pmatrix}
\underbrace{
\begin{pmatrix}
 {P}^{\star}_1 & & & \\
 & &  \ddots\\
& & & {P}^{\star}_M
\end{pmatrix}
}_{\text{say }\mathbb{P}^{\star}}
=
\begin{pmatrix}
 {P}_1 & \dots & {P}_M
\end{pmatrix}\\ \nonumber
\Rightarrow \begin{pmatrix}
             d_1 & \dots & d_M
            \end{pmatrix}
=
\begin{pmatrix}
 {P}_1 & \dots & {P}_M
\end{pmatrix} 
{
\begin{pmatrix}
 {P}^{\star}_1 & & &\\
 & &  \ddots\\
& & & {P}^{\star}_M
\end{pmatrix}
}^{-1} 
\end{aligned}
\end{eqnarray}
where, $\mathbb{P}^{\star -1}$ exists as it is a diagonal matrix with positive entries. We can normalize $d_i$'s to get $d_i\in (0,1)$. Then, the required $D$ matrix is given by ${\rm diag}(d_1,\dots,d_M)$.
\end{proof}
It is easy to see that the matrix $\mathbf{P}^\star$ satisfies the four conditions mentioned in Section~\ref{sec:problem1}. 
\begin{remark}
In Lemma~\ref{lemma1}, we show that under the perturbations described, we retain the irreducibility of the stochastic matrix (thus satisfy condition $1-3$ in Section~\ref{sec:problem1}) and through Theorem~\ref{thm:centralcontrol} we show synthesize a possible perturbation so that the perturbed irreducible stochastic matrix attains the desired distribution for the swarm (thus satisfy condition $4$ in Section~\ref{sec:problem1}). It is noted that the solution is not unique; however, this gives a closed-form solution for controller synthesis. 
\end{remark}
The above calculation could be done in one-shot and thus, presents a one-stage solution to the algorithm proposed for the same problem in~\cite{CR09}. Simulation results demonstrating the correctness of the algorithm are presented in section~\ref{sec:results}. The synthesized controller has asymptotic convergence guarantees and is globally stable (which follows from property of irreducible Markov chains and their stochastic kernels)~\cite{MT12}. However, the controller is implemented in an open-loop, feedforward fashion.
Next we show how to synthesize stable perturbations to the global policy matrix $\textbf{P}^\star$ based on the local information which can help reduce agent movement and can make the system possibly reactive to unmodeled events.

\subsection {Algorithm and Analysis for Distributed Autonomy}\label{sec:distsoln}
In the last section, we presented an analytical solution to the controller synthesis problem described in Section~\ref{sec:problem1}. It was based on the knowledge of global state of the swarm and the synthesized controller has asymptotic convergence and global stability guarantees. However, it might lead to unnecessary movement of agents at steady state  as the controller is implemented in an open-loop fashion and the knowledge of current state is not considered. Furthermore, the swarm cannot react to any unforeseen changes events which might lead to changes in requirement of agents at different states. It is desirable that the robots (agents) should have some degree of autonomy to choose their action based on their current state and local state information of the swarm. This forms the motivation of the problem described in Section~\ref{sec:problem2}. In this section, we present a framework to allow distributed autonomy to the agents so that they can decide to follow the global policy in a probabilistic fashion while retaining global stability. While it is in general difficult to define a unique notion of autonomy for distributed systems, here we will define a very simple notion of the same.
 
 We define the degree of autonomy as the fraction of the times an agent decides to follow the global policy against the policy of staying in the same state. To understand this notion more clearly, imagine the swarm is represented by a two-state Markov chain. Let us further assume that the global policy is given by the following stochastic matrix:
\begin{equation}
B=
\begin{bmatrix}
p_{11} & p_{12} \\
p_{21} & p_{22} \nonumber
\end{bmatrix}
\end{equation}
Then we define the degree of autonomy for a state $i$ by a scalar parameter $a_i \in (0,1)$ such that the final policy used by the agents is represented by the following matrix:
\begin{equation}
\hat{B}=
\begin{bmatrix}
a_1 p_{11}+(1-a_1) & a_1p_{12} \\
a_2 p_{21} & a_2 p_{22}+(1-a_2) \nonumber
\end{bmatrix}
\end{equation}
Thus, the agents decide to use the global policy in a probabilistic fashion, where the probability of following the global policy in a certain state is represented by the degree of autonomy (like $a_i$ in the above example). It is noted that all agents in the same state have the same degree of autonomy.

The idea presented in this section is to synthesize perturbations to the global stochastic policy  (which was synthesized in the last section). These perturbations denote the autonomy (of the corresponding degree) for the individual agents in following the global policy vs staying at the same state. We estimate a state-value function for the Markov chain that represents the swarm, based on the requirement of agents at the individual states. 

The local policy represents the probability with which an agent decides to either follow the global policy or stay in its current state. The goal is to achieve a pre-defined percentage of activity at steady-state (as described above in the notion of autonomy). To design such a controller, we use the state value function defined by language measure theory (see~\cite{J18} for a brief introduction). To formulate the problem of distributed autonomy using language measure theory, we augment the states of the Markov chain (that represents the swarm) with characteristic weights 
The augmented Markov chain then defines a probabilistic finite state automata (PFSA) which we use to define the state value function for our problem using language measure of the PFSA (see~\cite{JLWR15,chattopadhyay2015path, jha2016path}). The characteristic weights of the states are defined as follows:
\begin{equation}\label{eqn:chi}
\chi_i^{[k]}= e_i^{[k]} 
\end{equation} 
The vector $\mathbf{\chi}^{[k]}=[\chi_1^{[k]},\chi_2^{[k]},\dots,\chi_M^{[k]}]^T$ thus contains the information about the deficit or excess of agents at the individual tasks (or the states of the swarm). A positive value of $\chi$ would suggest deficit of agent in one state and a negative value of $\chi$ suggests excess of agents. The states with positive values of $\chi$ represent the good states to move to from a neighboring state. A greedy policy would thus try to move to the states with the maximum positive characteristic weights. As mentioned earlier, the goal is to achieve a certain fraction, $\lambda$, of original activity level at steady state. During the transient phase, some states with positive characteristic weights observe higher (than steady-state) activity rates than states with negative characteristic weights. It is presented more formally next.

 Instead of using a greedy policy, we compute the expected characteristic weights for the states which represent the overall expected rewards (where the expectation is computed over the states using the stochastic transition matrix of the agents) that the agents can accumulate while trying to achieve the target state distribution. 
 The expected value of the characteristic weights for the Markov chain (that represents the swarm) is calculated using the language measure theory. The expectation, parameterized by a parameter $\theta \in (0,1)$, of the characteristic weights of an irreducible Markov chain with stochastic matrix $\mathbf{P}$ is calculated by the following recursive equation.
\begin{equation}\label{eqn:measure}
\nu_i\leftarrow (1-\theta)\sum_{j \in {\mathrm {Nb}}(i)}\mathbf{P}_{ij}\nu_j +\theta \chi_i
\end{equation} 
where, ${\mathrm {Nb}}(i)$ is the neighborhood of the agent $i$ (Note that this requires computation over the nearest neighbors only and thus, doesnt require global information). A value of $\theta$ close to $0$ is selected for computing the expected sum. Interested readers are referred to earlier publications~\cite{CR07} for further discussion on choice of the parameter $\theta$. The parameter $\theta$ governs the horizon length for computation of expectation-- for values closer to $1$, we approach the greedy policy. We use $\theta=0.02$ for the computation of the expectation. In the next section, we will show that $\theta$ closer to $1$ makes system convergence slow but more stable to higher rates of feedback. The measure $\nu_i$ is the discounted expected value of $\chi$ for agents starting in state $i$. Clearly, $\mathbf{\nu}=\mathbf{0}$ if $\mathbf{\chi}=\mathbf{0}$. Convergence of the expected weights follow from the fact that $\mathbf{\chi}$ is a constant vector and $\mathbf P$ is a row stochastic matrix (see previous work in~\cite{JLWR15,  chattopadhyay2011goddes} for proofs on convergence). All the states (i.e., tasks) synchronously calculate their own $\nu_i$ and then broadcast it to their neighbors. This is repeated recursively till the expectations converge. Based on the measure defined in \eqref{eqn:measure}, we define a quantity, $\mathbf{\mu}=\mathbf{\nu}-\mathbf{\chi}$. 

\begin{algorithm}[t] \small
\SetKwInput{KwIn}{Input}
\SetKwInput{KwOut}{Output}
	\KwIn{The initial Markov kernel $\mathbf{P}$, the error vector $\mathbf{\chi}^{[k]}$ and the proportionality constant $\beta$.}
	\KwOut{The decentralized policy kernel $\mathbf{P}^\star$}
	\For{$k \in \mathbb{N}$}{
	\For{$i \in \{1,.\dots,M\}$ $\text{till convergence}$}{
	\For{$j \in \tt {Nb}(i)$}{
	\Comment{$\tt{Nb}(i)$ represents the neighborhood of state $i$}\\
	$\nu_i^{[k]}\leftarrow \sum\limits_{j \in \tt {Nb}(i)}(1-\theta)\mathbf P_{ij}\nu_j+\theta \chi_i^{[k]}$\;}}
	$\mu^{[k]}=\nu^{[k]}-\chi^{[k]}$\;
	\For{$i \in \{1,.\dots,M\}$}{
	$f_\lambda(\mu_i^{[k]})=\frac{1}{1+(\frac{1}{\lambda}-1){\mathrm {exp}}(-\beta^{[k]} \mu_i^{[k]})}$\;}
	\Return{$b_i^{[k]}=\{f_\lambda(\mu_i^{[k]})\}_{i=1,\dots,M}$\;}
	\Return{$\mathbf P^\star=B\mathbf P -B +I$, where $B=\mathrm{diag (\mathbf b^{[k]})}$ }}
	\caption{${\tt Estimating Distributed Policy}$ }
	\label{alg:DistPolicy}
\end{algorithm}	

The quantity $\mu_i$ represents the difference between the characteristic weight for state $i$ and the expected value of the $\chi$ for agents starting in state $i$. As such, a positive value of $\mu_i$ would mean that the states to which the agents can go from that state have higher expected reward than their current state and hence, such states are expected to have higher activity. An activation function is defined according to the following sigmoid function
\begin{equation}\label{eqn:activitylevel}
 f_\lambda(\mu_i^{[k]})=\frac{1}{1+(\frac{1}{\lambda}-1){\mathrm {exp}}(-\beta^{[k]} \mu_i^{[k]})}
\end{equation}
where, $\lambda \in (0,1)$ and $\beta^{[k]} \in \real_+$ are parameters. $\lambda$ denotes the steady state activity level for the agents. $\beta^{[k]}$ is a scaling factor for $\mu_i$. Clearly, $f_\lambda(\mu_i^{[k]})\in (0,1)$ for all $\mu_i^{[k]} \in \real$. The sigmoid functions are used to design smooth (i.e., the feedback rate is differentiable) feedback controllers. Figure~\ref{fig:activity} shows the variation in the change in activity of agents with scaling factor $\beta$ (superscript $k$ dropped for ease of notation). It can be seen in Figure~\ref{fig:activity} that higher values of $\beta$ lead to an increase in the rate of change of activity and thus, it can lead to instability (or oscillations) in the system when increased beyond an unknown threshold. In the next section, we will show that the system can get locked in limit cycles of different amplitudes based on the magnitude of the feedback rate used for controller design. The stable feedback rates depend on the size of the system, the parameter $\theta$ for computing the expectation and the parameter $\lambda$.

\begin{figure}[h]
\centering
 \includegraphics[scale=0.35]{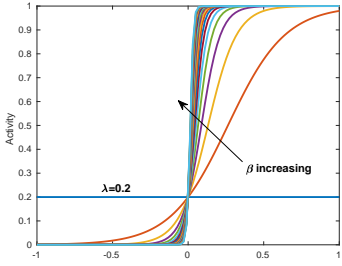}
\caption{This plot shows the effect of the parameter $\beta^{[k]}$ on the rate of change of activity.}
\label{fig:activity}
\end{figure}\vspace{+0pt}

 Thus, the feedback matrix for at epoch $k$ is given by,
\begin{eqnarray}\label{eqn:feedbackCont}
\tilde{\textbf{P}}^{[k]}_{ij} = b_i^{[k]} \textbf{P}^\star_{ij}, i\neq j \nonumber \\
\tilde{\textbf{P}}^{[k]}_{ij} = b_i^{[k]} \textbf{P}^\star_{ij}+(1-b_i^{[k]}), i=j 
\end{eqnarray}
where, $b_i^{[k]}=f_\lambda(\mu_i^{[k]})$ and $\mu_i^{[k]}$ was defined earlier. Thus the idea is that based on the state of a task and the neighboring tasks, an agent can probabilistically decide whether to follow the global policy or stay in the same task. Note that the perturbation in~\eqref{eqn:feedbackCont} satisfies conditions $1-3$ in Section~\ref{sec:problem2}. Clearly, 
$\lim\limits_{k\rightarrow \infty} f_\lambda(\mu_i^{[k]})=\lambda \text{ if } \lim\limits_{k\rightarrow \infty} \mu_i^{[k]}=0$. At steady state, $\nu=\chi=0$ and thus, $f_\lambda(\mu_i^{[k]})=0$ for all $i \in \{1,\dots,M\}$. Thus, at steady state (or the desired state), this should lead to $\lambda$ fraction of agents staying in their task by deciding against the global policy to switch. At every iteration, the agent activity level $b_i^k$ decides the probability with which the agents in task $i$ decide against the global policy and stay in the same task; this results in overall reduced activity. The algorithm for computing the distributed policy is also presented as a pseudo-code in Algorithm~\ref{alg:DistPolicy}.

At steady state, $\chi=0$ and $\nu=0$ and thus $\tilde{\mathbf{P}}=\lambda \mathbf{P}^\star-\lambda I + I$. It is easy to see that the stationary distribution of the matrix $\mathbf P^\star$ is also the stationary distribution of the steady-state matrix $\tilde{\mathbf{P}}$. More concretely, if $\mathbf{p}^d$ is the stationary distribution of $\mathbf{P}^\star$, then $\mathbf{p}^d \tilde{\mathbf{P}}= \mathbf{p}^d(\lambda \mathbf{P}^\star-\lambda I + I)= \lambda \mathbf{p}^d+(1-\lambda)\mathbf{p}^d=\mathbf{p}^d$. This satisfies condition $5$ specified in Section~\ref{sec:problem2}.\\

\textbf{Design of Stable Feedback:} Next we consider the convergence of the feedback controller represented by condition $4$ in Section~\ref{sec:problem2}. We first present a candidate Lyapunov function to present the sufficient conditions for system stability. A possible candidate is described next.
\begin{equation}\label{eqn:lyapunov}
V^{[k]}(\mathbf{p})=\sum_{i=1}^M(p_i^d-p_i^{[k]})^2
\end{equation} 
From equation~(\ref{eqn:lyapunov}), we have $V^{[k]}(\mathbf{p}^d)=0$ and $V^{[k]}(\mathbf{p})\geq 0$ for all $\mathbf{p}$. Then, for system stability it is required that $\Delta V^{[k]}(\mathbf{p})\leq 0$ for all $k\in \mathds N$. From equation~(\ref{eqn:lyapunov}), we get the following expression for $\Delta V^{[k]}(\mathbf{p})$.
\begin{equation}\label{eqn:lyapunovstab}
\Delta V^{[k]}(\mathbf{p}) =V^{[k]}(\mathbf{p})-V^{[k-1]}(\mathbf{p})
\end{equation}
Further expansion of equation~\eqref{eqn:lyapunov} leads to the following form 
\begin{equation}\label{eqn:LyapStab1}
\Delta V^{[k]}(\mathbf{p}) = \sum\limits_{i=1}^M(2p_i^d-p_i^{[k]}-p_i^{[k-1]})(p_i^{[k-1]}-p_i^{[k]})
\end{equation}
In equation~\eqref{eqn:LyapStab1}, we introduce the following notations $e_i^{[k]}= p_i^d-p_i^{[k]} $ and $\Delta e_i^{[k]}= e_i^{[k]}-e_i^{[k-1]}$. Then, using the Lyapunov condition for stability, the swarm is stable if the following holds true.
\begin{equation}\label{eqn:LyapStab2}
\sum\limits_{i=1}^M(e_i^{[k]}+e_i^{[k-1]})\Delta e_i^{[k]} \leq 0
\end{equation}
Based on the definition of $e_i^{[k]}$, it is not difficult to see that a sufficient condition for equation~\eqref{eqn:LyapStab2} to hold true is that 
\begin{equation}\label{eqn:Lyapstab3}
e_i^{[k]}\Delta e_i^{[k]}\leq 0
\end{equation}
 for all $i \in \{1,\dots, M\}$. Let us denote the sequence $e_i^{[k]}\Delta e_i^{[k]}$ by $\zeta_i^{[k]}$. Thus a sufficient condition for stability of the distributed system is that there exists a $N\in \natural$ such that the sequence $\{\zeta_i^{[k]}\}_{k\geq N}$ is monotonic and as $\zeta_i^{[k]}$ is upper bounded by $0$, we have that $\lim\limits_{k\rightarrow \infty}\zeta_i^{[k]}=0$ for all $i \in \{1,\dots, M\}$. Then, to ensure stability we design the perturbations to the global policies so that the above condition holds. It is noted that this only defines a sufficient condition for stability of the system.
Based on the condition expressed in~\eqref{eqn:Lyapstab3}, we need to ensure that for all $\beta \in \real$ we can finally ensure that there exists a $N\in \natural$ such that $\{\zeta_i^{[k]}\}_{k\geq N}$ is monotonic. Loosely speaking it means that the feedback rate around the equilibrium point be sufficiently reduced. The analysis of sigmoid functions for stability of neural networks has been well studied in literature. In some analysis present in literature, it has been shown that if the feedback rate should be smaller than inverse of the second largest eigenvalue of the stochastic matrix~\cite{JG96}. With this motivation, we can design a controller whose feedback rate decreases as we move closer to the regulated state.

In the controller design, we don't need to find the actual value of maximum allowable $\beta^{[k]}$ to ensure stability; the existence is enough to guarantee global asymptotic stability. The idea is to reduce the feedback constant rate $\beta$ as the system moves towards the desired state-- thus reducing the activity of the states as the system moves towards the regulated state. A very simple feedback rate dependent on the iteration number is $\beta^{[k]}=\gamma/k$, where $\gamma \in \real_+$ is a constant. Similarly, an exponentially decreasing $\beta^{[k]}$ could also work well (e.g., $\beta^{[k]}=\gamma*\exp^{(-k/N)}$, where $N$ is some large constant). As a result of this feedback rate design, the feedback rate eventually reduces to a level for monotonic convergence of the system irrespective of the size of the system. 

\begin{figure}[h]
\centering
 \includegraphics[scale=0.25]{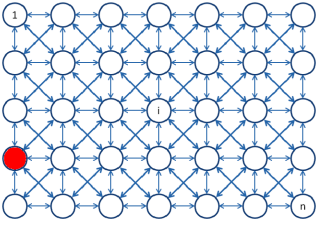}
\caption{The tasks modeled as a graph with its connectivity topology. Initially all the agents are in the state marked in red; the desired distribution is to uniformly distribute them over all tasks.}
\label{fig2:Graph}
\end{figure}\vspace{+0pt}
\section{Numerical Results and Discussion}\label{sec:results}
In this section, we present some numerical results based on the proposed algorithms described in sections~\ref{sec:closedsoln} and~\ref{sec:distsoln}. Through the numerical experiments, we try to answer the following questions about the controller design that was presented in the earlier sections:
\begin{enumerate}
    \item Can we show the correctness of the proposed central controller design? Does it achieve the desired global behavior?
    \item Can we understand and ensure stability of the proposed distributed controller?
\end{enumerate}
To answer the above mentioned questions, we design numerical experiments in a swarm which is described next. We consider a task allocation problem which consists of the $35$ tasks being operated in parallel. The connectivity of tasks is represented as a graph which is shown in Figure~\ref{fig2:Graph}. Each node of the graph (or a task) is connected to $8$ neighboring nodes (or tasks) except for the ones on the edges which are connected to either $5$ or $3$ nodes (see Figure~\ref{fig2:Graph}). 

\begin{figure}[h]
\centering
\subfloat[Convergence of the agent distribution to the desired distribution]{
\includegraphics[scale=0.25]{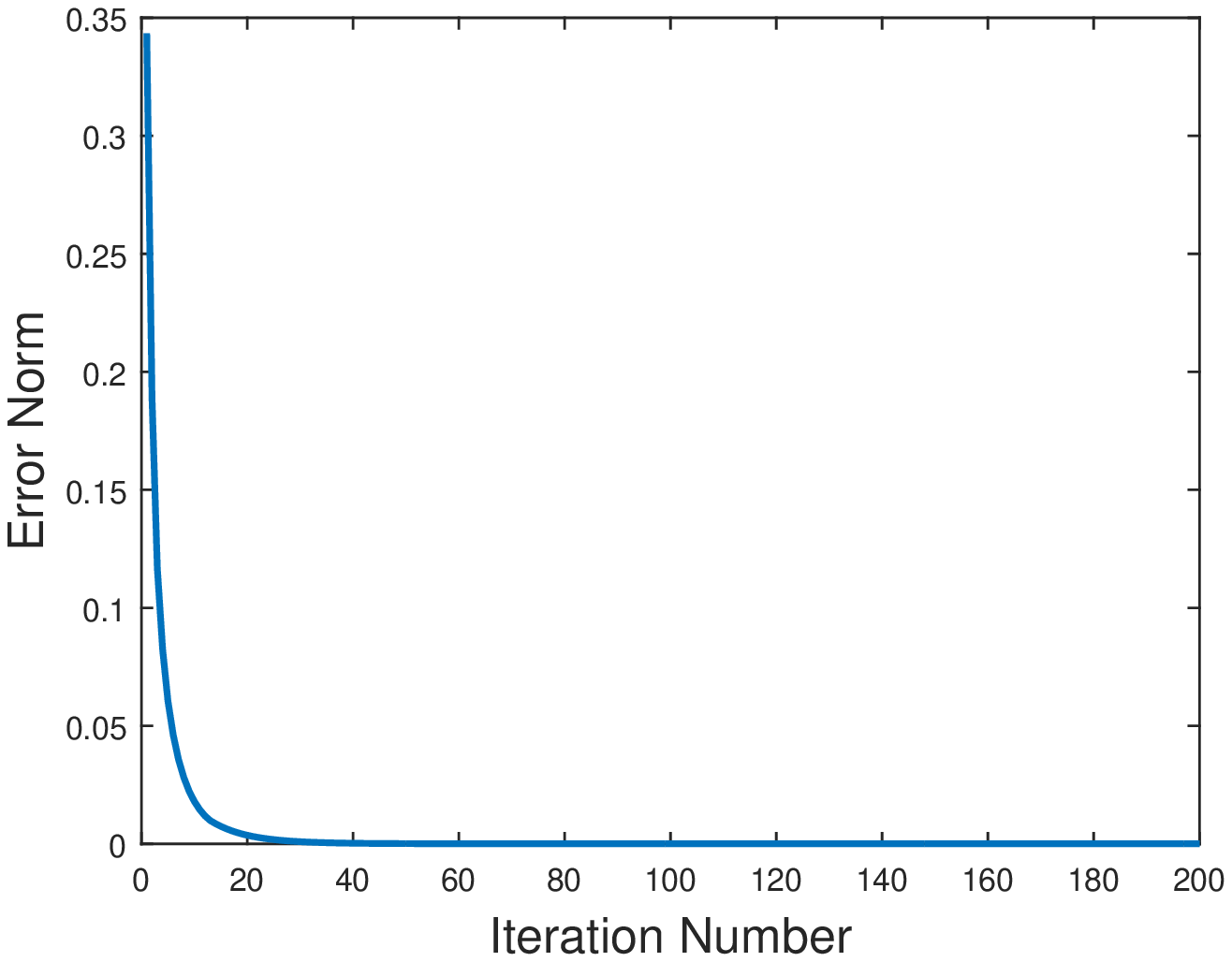}\vspace{-0pt}
\label{fig2a:figCC1}
}\vspace{+0pt}
\centering
\subfloat[Activity level at steady state]{
\includegraphics[scale=0.25]{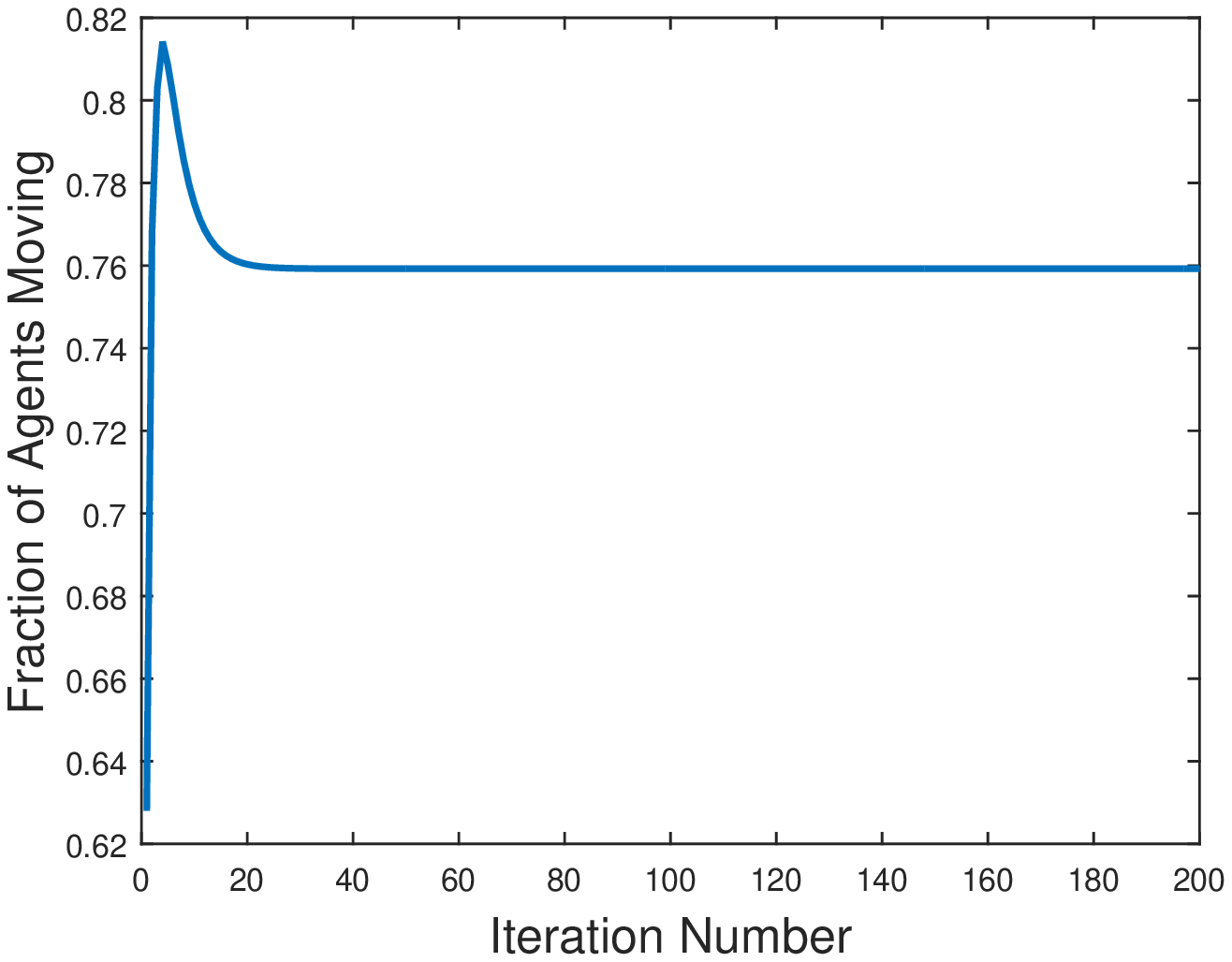}\vspace{-0pt}
\label{fig2b:figCC2}
}\vspace{+0pt}
\caption{Convergence of system states and agent activity obtained by the central control policy}\vspace{-0pt}
\label{fig2:figCC}
\end{figure}\vspace{+0pt}
\begin{figure}[h]
\centering
\subfloat[A high proportional gain in the local policy synthesis results in steady-state error with oscillatory behavior]{
\includegraphics[scale=0.25]{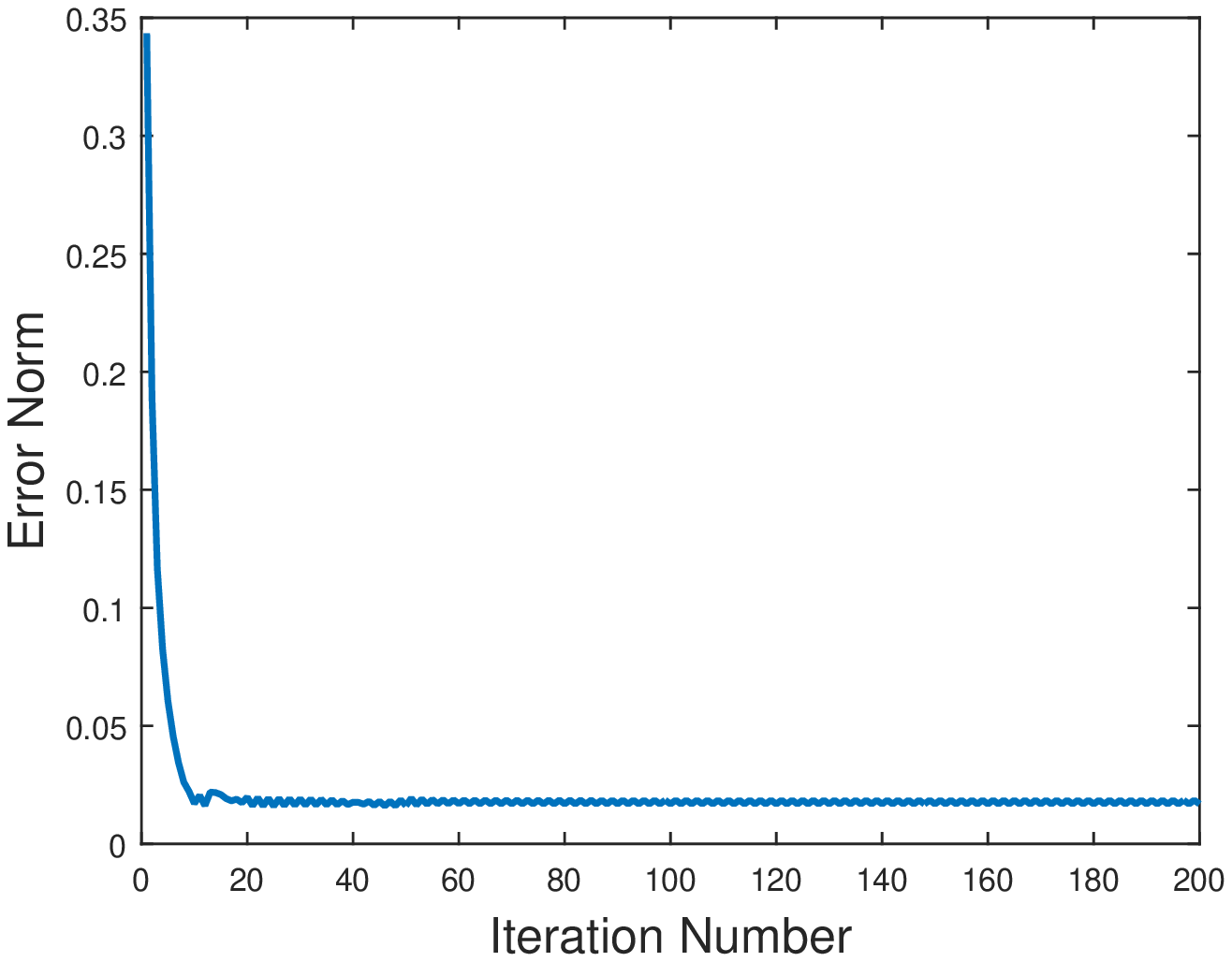}\vspace{-0pt}
\label{fig2a:figDCUS1}
}\vspace{+0pt}
\centering
\subfloat[Activity level at steady state]{
\includegraphics[scale=0.25]{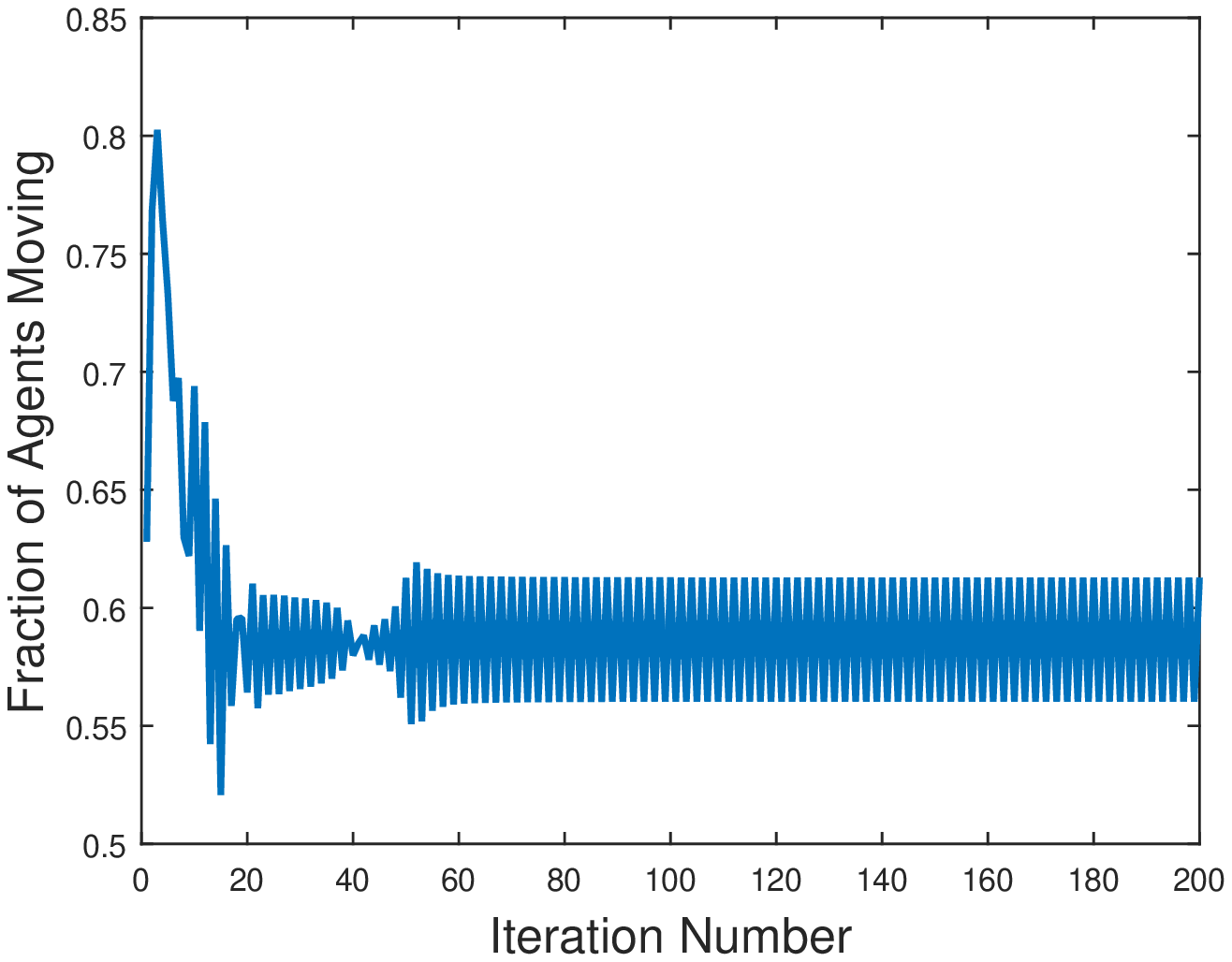}\vspace{-0pt}
\label{fig2b:figDCUS2}
}\vspace{+0pt}
\caption{Results corresponding to a fixed high proportional gain leading to oscillatory behavior}\vspace{-0pt}
\label{fig2: figDCUS}
\end{figure}\vspace{+0pt}
\begin{figure}[h]
\centering
\subfloat[Convergence of the agent distribution to the desired distribution over the parallel tasks]{
\includegraphics[scale=0.25]{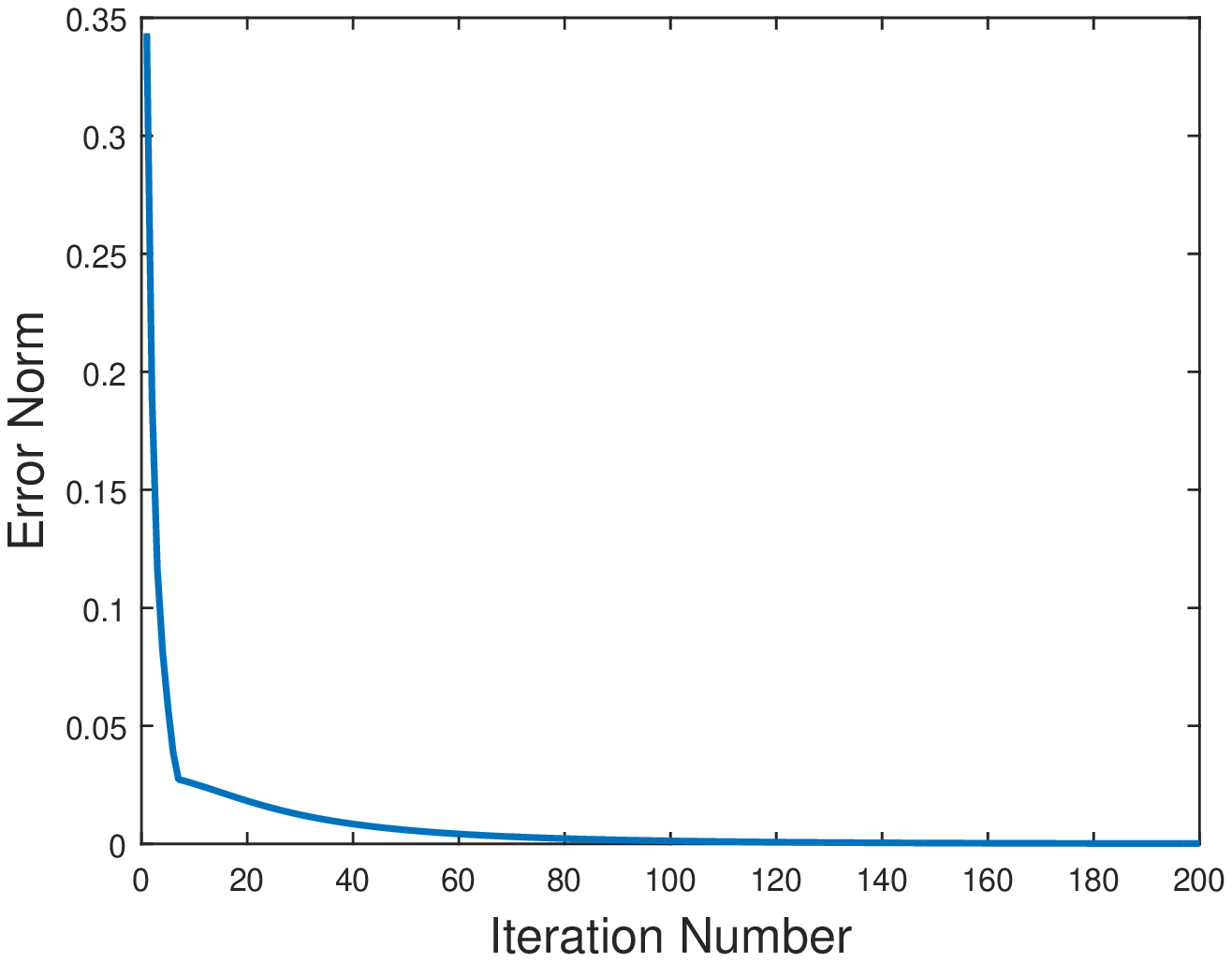}\vspace{-0pt}
\label{fig2a:figDCS1}
}\vspace{+0pt}
\centering
\subfloat[Activity level at steady state]{
\includegraphics[scale=0.25]{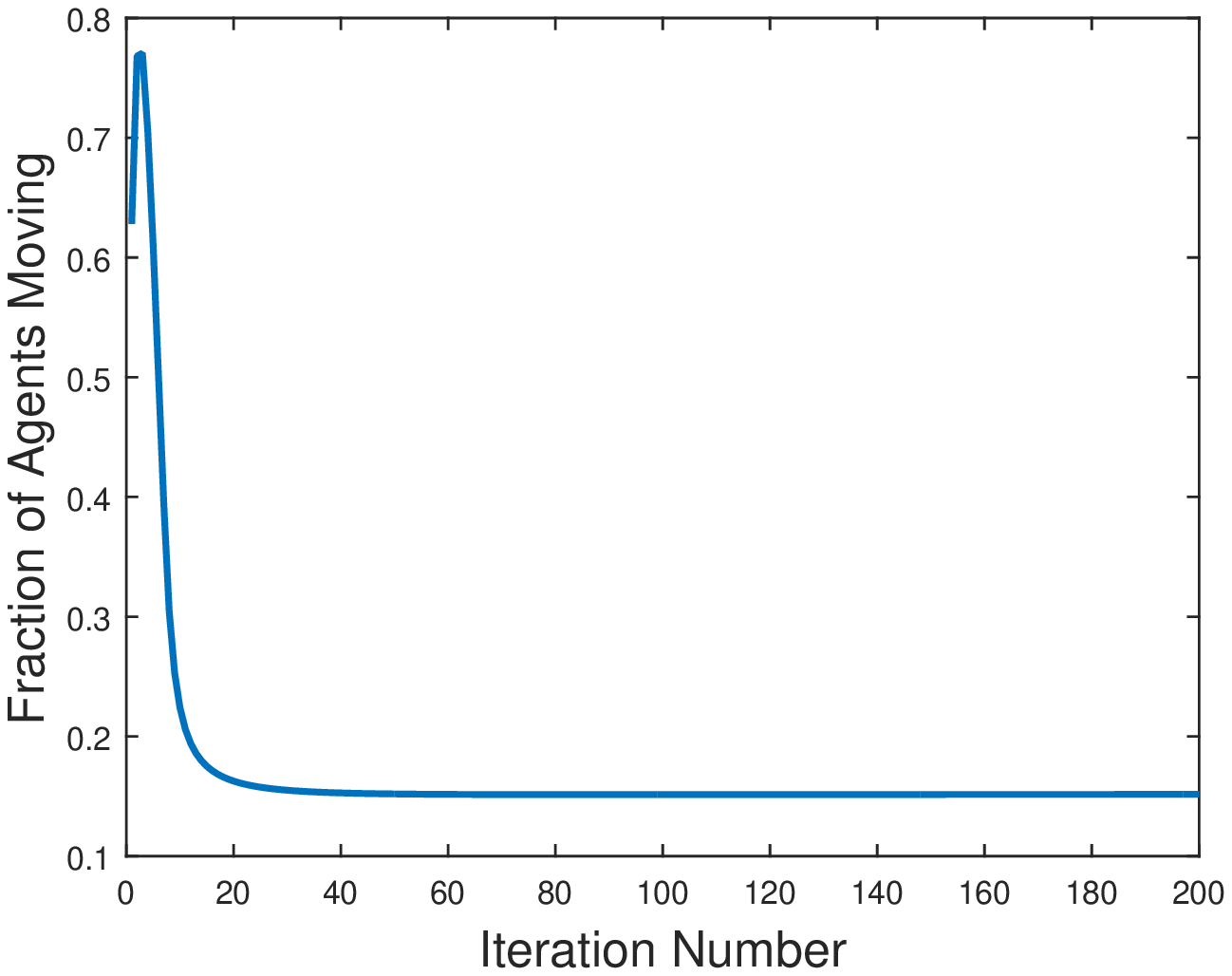}\vspace{-0pt}
\label{fig2b:figDCS2}
}\vspace{+0pt}
\caption{Convergence and agent activity level obtained by the distributed feedback controller with $\beta^{[k]}=\gamma/k$, where $\gamma=600$.}\vspace{-0pt}
\label{fig2: figDCS}
\end{figure}\vspace{+0pt}


\begin{figure}[h]
\centering
 \includegraphics[scale=0.25]{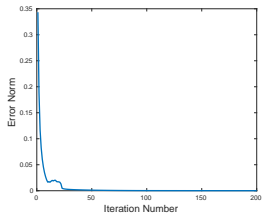}
\caption{Convergence of agent movement with exponentially decaying $\beta^{[k]}=\gamma*\exp^{(-k/N)}$, where $\gamma=2000$, $N=100$.}
\label{fig:fig4}
\end{figure}\vspace{+0pt}

\begin{figure}[h]
\centering
 \includegraphics[scale=0.25]{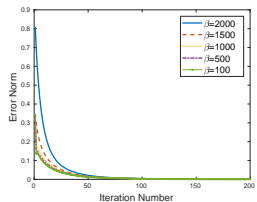}
\caption{Convergence of the distributed controller for various values of $\beta$ for $\theta=0.98$. Clearly, higher values of $\theta$ are slower but more stable to feedback rates. }
\label{fig:fig5}
\end{figure}\vspace{+0pt}

Initially all the agents are located at the same task which is marked in red in the graph. The agents need to be distributed uniformly over all the tasks at steady state. In Figure~\ref{fig2:figCC} we show the results obtained for the central controller. The system converges to the desired distribution monotonically (see Figure~\ref{fig2a:figCC1}) ; however, there is a lot of activity at steady state which is undesirable (see Figure~\ref{fig2b:figCC2}).  

Next we consider the distributed control which is calculated as perturbation to the central control policy discussed earlier. For the distributed control, the parameter $\theta$ is selected to be $0.02$ and the desired activity level is taken to be $\lambda=0.2$ i.e., only $20\%$ of the agents should move at steady state when compared to the movement shown in Figure~\ref{fig2b:figCC2}. We first show a controller which is unstable due to very high gain which leads to unstable (oscillatory) behavior in the swarm around the desired state. To see this, we select a high proportional gain corresponding to $\beta=600$ which is kept constant through all iterations i.e., $\beta^{[k]}=600$. The results (see Figure~\ref{fig2: figDCUS}) show the unstable or oscillatory behavior of the system; the agents keep moving between the tasks without ever reaching the steady state. As it is shown in Figure~\ref{fig2a:figDCUS1}, this results in steady state error. As seen in Figure~\ref{fig2b:figDCUS2}, the agent activity becomes oscillatory and is not able to achieve $\lambda$ fraction of activity at steady state. The reason for the oscillatory behavior is the high proportional feedback gain which is being used for estimating feedback. With a high-feedback gain the system is never able to reach the desired set-point and keeps over-shooting. This also motivates the Lyapunov-based design where, a sufficient condition for stability is reached by the proportional feedback gain given by $\beta^{[k]}=\gamma/k$ for $\gamma \in \real$.

Next we show the use of proportional feedback given by $\beta^{[k]}=\gamma/k$ where $\gamma=600$ which guarantees asymptotic convergence. In Figure~\ref{fig2: figDCS}, we show the results of the stable distributed proportional controller. As seen in Figure~\ref{fig2a:figDCS1}, the error norm asymptotically converges to zero. Figure~\ref{fig2b:figDCS2} shows the agent activities at steady state reaches $\lambda$ fraction of the activity achieved by the central control policy. It is noted that the system convergence rate is slower than the global policy; however we reach with the desired activity with lesser activity and a desired reduced activity can be achieved at steady state. A similar result with an exponentially decaying feedback rate is also shown as an example in Figure~\ref{fig:fig4}. Compared to the central control, the distributed controller is slower (as seen by the error convergence rates in Figures~\ref{fig2:figCC} and~\ref{fig2a:figDCUS1}). However, the distributed controller is able to achieve the target state with reduced movement of agents and maintains reduced activity at steady state. In general, using an exponentially decaying feedback allowed use of more aggressive feedback rates in the beginning and thus leads to faster convergence.

Next we present some results related to the dependence on the system convergence rate and stability on the parameter $\theta$. As discussed earlier, the parameter $\theta$ governs the horizon of the expected sum while computing the state measure. A value of $\theta$ closer to $1$ leads to a greedy policy by the agents (as the expected sum depends largely on the immediate rewards). Figure~\ref{fig:fig5} shows convergence of the distributed controller for $\theta=0.98$ and different constant values of $\beta$. As can be seen, with increasing values of $\beta$, the system convergence becomes slower for values of $\theta$ closer to $1$. This becomes clear from the expression of $\nu_i$ in Equation~\eqref{eqn:measure} where the coefficient of $\chi_i$ is $\theta$. As a result, the term $\nu_i$ is dominated by $\chi_i$ and thus $\mu_i$ is very close to $0$. This allows use of higher rates in Equation~\eqref{eqn:activitylevel}. This is however, opposite of what was observed with $\theta$ close to $0$. In general, using an exponentially decaying feedback rate with $\theta$ close to $0$ lead to best performance as well as stability of the distributed system (seen in Figure~\ref{fig:fig4}). 

\section{Conclusions}\label{sec:conclusions}
In this paper, we modeled a homogeneous swarm as a collection of irreducible Markov chains. The problem of synthesizing controller for the swarm is then equivalent to estimating the Markov kernel of the Markov chain whose stationary distribution is the desired swarm state. We presented analysis for an analytical solution to the centralized swarm control problem which greatly alleviates controller synthesis computational complexity for swarms with large number of states. Next we proposed a local-information-based perturbation to the centralized controller which can achieve reduced activity with guarantees of global asymptotic stability. 


 
\bibliographystyle{ieeetr}
\bibliography{references}

\begin{thebibliography}{10}

\bibitem{WPN14}
J.~Werfel, K.~Petersen, and R.~Nagpal, ``Designing collective behavior in a
  termite-inspired robot construction team,'' {\em Science}, vol.~343,
  no.~6172, pp.~754--758, 2014.

\bibitem{RCN14}
M.~Rubenstein, A.~Cornejo, and R.~Nagpal, ``Programmable self-assembly in a
  thousand-robot swarm,'' {\em Science}, vol.~345, no.~6198, pp.~795--799,
  2014.

\bibitem{jha2015game}
D.~K. Jha, M.~Zhu, and A.~Ray, ``Game theoretic controller synthesis for
  multi-robot motion planning-part ii: Policy-based algorithms,'' {\em
  IFAC-PapersOnLine}, vol.~48, no.~22, pp.~168--173, 2015.

\bibitem{wang2017two}
Y.~Wang, D.~K. Jha, and Y.~Akemi, ``A two-stage rrt path planner for automated
  parking,'' in {\em 2017 13th IEEE Conference on Automation Science and
  Engineering (CASE)}, pp.~496--502, IEEE, 2017.

\bibitem{J18}
D.~K. Jha, ``Algorithms for task allocation in homogeneous swarms of robots,''
  in {\em International Conference on Intelligent Robots and Systems (IROS)},
  p.~Accepted, IEEE/RSJ, 2018.

\bibitem{R05}
A.~Ray*, ``Signed real measure of regular languages for discrete event
  supervisory control,'' {\em International Journal of Control}, vol.~78,
  no.~12, pp.~949--967, 2005.

\bibitem{CR07}
I.~Chattopadhyay and A.~Ray, ``Structural transformations of probabilistic
  finite state machines,'' {\em International Journal of Control}, vol.~81,
  no.~5, pp.~820--835, 2008.

\bibitem{JLWR15}
D.~K. Jha, Y.~Li, T.~A. Wettergren, and A.~Ray, ``Robot path planning in
  uncertain environments: A language-measure-theoretic approach,'' {\em Journal
  of Dynamic Systems, Measurement, and Control}, vol.~137, no.~3, p.~034501,
  2015.

\bibitem{khamis2015multi}
A.~Khamis, A.~Hussein, and A.~Elmogy, ``Multi-robot task allocation: A review
  of the state-of-the-art,'' in {\em Cooperative Robots and Sensor Networks
  2015}, pp.~31--51, Springer, 2015.

\bibitem{BHHK09}
S.~Berman, {\'A}.~Hal{\'a}sz, M.~A. Hsieh, and V.~Kumar, ``Optimized stochastic
  policies for task allocation in swarms of robots,'' {\em Robotics, IEEE
  Transactions on}, vol.~25, no.~4, pp.~927--937, 2009.

\bibitem{CR09}
I.~Chattopadhyay and A.~Ray, ``Supervised self-organization of homogeneous
  swarms using ergodic projections of {M}arkov chains,'' {\em Systems, Man, and
  Cybernetics, Part B: Cybernetics, IEEE Transactions on}, vol.~39, no.~6,
  pp.~1505--1515, 2009.

\bibitem{NK11}
N.~Napp and E.~Klavins, ``A compositional framework for programming
  stochastically interacting robots,'' {\em The International Journal of
  Robotics Research}, vol.~30, no.~6, pp.~713--729, 2011.

\bibitem{AB12}
B.~Acikmese and D.~S. Bayard, ``A {M}arkov chain approach to probabilistic
  swarm guidance,'' in {\em American Control Conference (ACC), 2012},
  pp.~6300--6307, IEEE, 2012.

\bibitem{BKN12}
S.~Berman, V.~Kumar, and R.~Nagpal, ``Design of control policies for spatially
  inhomogeneous robot swarms with application to commercial pollination,'' in
  {\em Robotics and Automation (ICRA), 2011 IEEE International Conference on},
  pp.~378--385, IEEE, 2011.

\bibitem{AYS13}
S.-i. Azuma, R.~Yoshimura, and T.~Sugie, ``Broadcast control of multi-agent
  systems,'' {\em Automatica}, vol.~49, no.~8, pp.~2307--2316, 2013.

\bibitem{ito2020pseudo}
Y.~Ito, M.~A.~S. Kamal, T.~Yoshimura, and S.-i. Azuma,
  ``Pseudo-perturbation-based broadcast control of multi-agent systems,'' {\em
  Automatica}, vol.~113, p.~108769, 2020.

\bibitem{JGAJ12}
A.~Jevti{\'c}, A.~Guti{\'e}rrez, D.~Andina, and M.~Jamshidi, ``Distributed bees
  algorithm for task allocation in swarm of robots,'' {\em Systems Journal,
  IEEE}, vol.~6, no.~2, pp.~296--304, 2012.

\bibitem{FJHJ15}
B.~Fonooni, A.~Jevti{\'c}, T.~Hellstr{\"o}m, and L.-E. Janlert, ``Applying ant
  colony optimization algorithms for high-level behavior learning and
  reproduction from demonstrations,'' {\em Robotics and Autonomous Systems},
  vol.~65, pp.~24--39, 2015.

\bibitem{HHBK08}
M.~A. Hsieh, {\'A}.~Hal{\'a}sz, S.~Berman, and V.~Kumar, ``Biologically
  inspired redistribution of a swarm of robots among multiple sites,'' {\em
  Swarm Intelligence}, vol.~2, no.~2-4, pp.~121--141, 2008.

\bibitem{CDLN14}
A.~Cornejo, A.~Dornhaus, N.~Lynch, and R.~Nagpal, ``Task allocation in ant
  colonies,'' in {\em Distributed Computing}, pp.~46--60, Springer, 2014.

\bibitem{jang2018local}
I.~Jang, H.-S. Shin, and A.~Tsourdos, ``Local information-based control for
  probabilistic swarm distribution guidance,'' {\em Swarm Intelligence},
  vol.~12, no.~4, pp.~327--359, 2018.

\bibitem{jang2018anonymous}
I.~Jang, H.-S. Shin, and A.~Tsourdos, ``Anonymous hedonic game for task
  allocation in a large-scale multiple agent system,'' {\em IEEE Transactions
  on Robotics}, no.~99, pp.~1--15, 2018.

\bibitem{bandyopadhyay2017probabilistic}
S.~Bandyopadhyay, S.-J. Chung, and F.~Y. Hadaegh, ``Probabilistic and
  distributed control of a large-scale swarm of autonomous agents,'' {\em IEEE
  Transactions on Robotics}, vol.~33, no.~5, pp.~1103--1123, 2017.

\bibitem{TN14}
B.~Touri and A.~Nedic, ``Product of random stochastic matrices,'' {\em
  Automatic Control, IEEE Transactions on}, vol.~59, no.~2, pp.~437--448, 2014.

\bibitem{BXMS13}
D.~Bajovic, J.~Xavier, J.~M. Moura, and B.~Sinopoli, ``Consensus and products
  of random stochastic matrices: Exact rate for convergence in probability,''
  {\em Signal Processing, IEEE Transactions on}, vol.~61, no.~10,
  pp.~2557--2571, 2013.

\bibitem{MT12}
S.~P. Meyn and R.~L. Tweedie, {\em Markov chains and stochastic stability}.
\newblock Springer Science \& Business Media, 2012.

\bibitem{chattopadhyay2015path}
P.~Chattopadhyay, D.~K. Jha, S.~Sarkar, and A.~Ray, ``Path planning in
  gps-denied environments: A collective intelligence approach,'' in {\em 2015
  American Control Conference (ACC)}, pp.~3082--3087, IEEE, 2015.

\bibitem{jha2016path}
D.~K. Jha, P.~Chattopadhyay, S.~Sarkar, and A.~Ray, ``Path planning in
  gps-denied environments via collective intelligence of distributed sensor
  networks,'' {\em International Journal of Control}, vol.~89, no.~5,
  pp.~984--999, 2016.

\bibitem{chattopadhyay2011goddes}
I.~Chattopadhyay and A.~Ray, ``Goddes: Globally $\epsilon$-optimal routing via
  distributed decision-theoretic self-organization,'' in {\em Proceedings of
  the 2011 American Control Conference}, pp.~3215--3220, IEEE, 2011.

\bibitem{JG96}
L.~Jin and M.~M. Gupta, ``Globally asymptotical stability of discrete-time
  analog neural networks,'' {\em Neural Networks, IEEE Transactions on},
  vol.~7, no.~4, pp.~1024--1031, 1996.

\end{thebibliography}
\end{document}